\documentclass[11pt]{article}

\usepackage{authblk}
\usepackage[linesnumbered]{algorithm2e}
\usepackage{graphicx,amsmath,amsthm}
\usepackage{multirow}
\usepackage{subcaption}

\newtheorem{lemma}{Lemma}

\begin{document}

\title{Generation and Prediction of Difficult Model Counting Instances}

\author[1]{Guillaume Escamocher}
\author[1]{Barry O'Sullivan}

\affil[1]{Insight Centre for Data Analytics\\
School of Computer Science and Information Technology, University College Cork, Ireland}

\date{}
%\author{\name Guillaume Escamocher \email guillaume.escamocher@insight-centre.org \\
%       \name Barry O'Sullivan \email b.osullivan@cs.ucc.ie \\
%       \addr Insight Centre for Data Analytics,\\
%       School of Computer Science \& Information Technology\\
%       Cork, Ireland}

\maketitle

\begin{abstract}
We present a way to create small yet difficult model counting instances. Our generator is highly parameterizable: the number of variables of the instances it produces, as well as their number of clauses and the number of literals in each clause, can all be set to any value. Our instances have been tested on state of the art model counters, against other difficult model counting instances, in the Model Counting Competition. The smallest unsolved instances of the competition, both in terms of number of variables and number of clauses, were ours. We also observe a peak of difficulty when fixing the number of variables and varying the number of clauses, in both random instances and instances built by our generator. Using these results, we predict the parameter values for which the hardest to count instances will occur.
\end{abstract}

\section{Introduction}

In the field of Constraint Programming, hard to solve instances are highly sought after because they present an opportunity of improvement to solvers, by testing and challenging them. Valuable insights can be gained by running solvers on difficult instances, whether these benchmarks come from industrial applications~\cite{industrialinsights} or were specifically crafted to this aim~\cite{craftedinsights}. Various competitions provide regular opportunities for studying strngths and weaknesses of the latest constraint solvers. Competitions for satisfaction problems have been done for two decades~\cite{competitionsSAT,competitionsMZ}, while competitions for optimization problems have existed for almost as long~\cite{competitionsMaxSAT}. In contrast, the search for challenging model counting instances is a more recent endeavor, with the first edition of the Model Counting Competition only taking place in 2020~\cite{competition2020}. Because model counting has a wide range of applications, especially in the field of probabilistic reasoning~\cite{application2003,application2005,application2007,application2008,application2015}, studying difficult model counting instances is of great interest.

It is preferable for benchmark instances to exhibit certain properties in order to return useful feedback. First, generating them should be considerably faster than solving them. If it takes as long to build them than to solve them, then the number of available instances, and therefore the number of opportunities the solvers have to test themselves, will be limited. Second, the instances should be as diverse as possible, covering a wide array of sizes and structures. This is to ensure that the feedback obtained by running solvers on the instances is not restricted to a specific subset of the problem. Lastly, to better capture the challenging areas of model counting, it is desirable that the hardness of difficult instances comes from actually having to count the models, and not from the question of satisfiability. If a constraint instance has no solution and its proof of unsatisfiability is hard to establish for satisfaction solvers, then it represents a useful tool to improve satisfaction solvers, but is not as helpful for model counters because it does not address the distinct particularities of the model counting problem.

The instances produced by our generator fulfill all three above conditions. They are fast to build, in linear time in the number of variables times the number of literals. Their parameters (arity, number of variables, number of clauses) can be set to any value, even the arity of each individual clause can be adjusted independently of the others. Finally, while it is difficult to count their number of models, finding a single solution is easy. This means that whatever challenge the instances present is indeed related to the model counting problem.

Not all constraint instances are equally hard to solve, even when they belong to the same problem and contain the same number of variables. For random Conjunctive Normal Form instances, increasing the number of clauses while fixing the arity and the number of variables exposes the existence of two distinct phases, one where most instances are satisfiable, and the other where on the contrary most instances are unsatisfiable. The transition between the two phases is sharp~\cite{phasetransitionSAT} and corresponds to a peak in the difficulty of determining whether an instance has a solution~\cite{peakofdifficultySAT}, with the hardness of the problem decreasing as the number of clauses deviates from the transition position.

Basing the definition of the phase transition on satisfiability probability can of course only be done for problems where not all instances have a solution. However, other definitions have been adopted, revealing that the phase transition behavior occurs even for problems where all instances are satisfiable~\cite{phasetransitionsatisfiable}. For random model counting instances, phase transitions that depend on the probability for the number of solutions to be above a given threshold have been observed~\cite{phasetransitionsMC}. While the particular numbers picked by the authors in their experiments did not allow them to notice any peak of difficulty associated to these phase transitions, we will empirically show that these peaks do in fact exist for some other threshold values, at least for ternary instances.

The outline of our paper is as follows. In the next Section, we recall the definitions related to the general Satisfiability Problem and describe our generation algorithm in details. In Section~\ref{sec:competition}, we compare the performance of our instances against other benchmarks in the most recent Model Counting Competition, highlighting how our generator produced the smallest challenging instances of the benchmark pool. In Section~\ref{sec:empire}, we present the results of two sets of experiments. First, we show that when increasing the number of clauses, there emerges a peak in the difficulty of model counting instances, corresponding for ternary instances to a phase transition associated with a specific number of solutions threshold. Second, we show that we can use this data to reliably predict where in the constrainedness map will difficult model counting instances appear, without having to generate and solve them beforehand. Finally, we conclude in Section~\ref{sec:conclusion}.

\section{Generation}\label{sec:generation}

A \emph{Boolean} variable is a variable with a domain of size two. The values in the domain are typically called {\tt True} and {\tt False}. A \emph{literal} is either a Boolean variable or its negation. A \emph{clause} is a disjunction of literals, meaning that it is true if and only if at least one of its literals is true. The \emph{arity} of a clause is its number of literals. A \emph{Conjunctive Normal Form (CNF) instance} is a set of Boolean variables and a set of clauses on literals from these variables. A \emph{solution}, or \emph{model} to the instance is an assignment to all variables such that all clauses of the instance are true. If an instance admits a solution, then it is \emph{satisfiable}. Otherwise, it is \emph{unsatisfiable}.

The Satisfiability Problem, called SAT for short, consists in finding out whether a given CNF instance is satisfiable. This problem is NP-Complete, and was in fact the first problem proven to be so~\cite{sat}. If for some $k$ the arity of the clauses in a CNF instance is at most $k$, then the arity of the instance is $k$. We call this type of instances \emph{$k$-CNF}. When restricted to $k$-CNF instances, SAT is called $k$-SAT and it remains NP-Complete as long as $k\geq 3$~\cite{3sat}, although the problem is tractable if $k\leq 2$~\cite{2sat}.

\#SAT is the model counting version of SAT, that is the act of determining the number of solutions of a given CNF instance. \#SAT is \#P-Complete, even for binary instances~\cite{sharp2sat}.

Before presenting our generator proper, we describe the process of placing a solution $S$ in an instance $I$ in Algorithm~\ref{alg:placesolution}. The general idea is to check that each clause of $I$ satisfies $S$, and for any that does not we flip one of its literals $l$ to its negation $\overline{l}$. Only one literal in a clause needs to be true for the whole clause to be considered true, so flipping the polarity of any literal in each clause falsified by $S$ ensures that the resulting instance admits $S$ as a solution.

\begin{algorithm}
\caption{\label{alg:placesolution}PlaceSolution($I$,$S$)}
\KwData{An instance $I$ with $m$ clauses $C_1,C_2,\dots,C_m$ and a solution $S$.}
\KwResult{An instance that admits $S$ as a solution.}
\For{$i\leftarrow 1$ \KwTo $m$}
    {\If{no literal in $C_i$ is true in $S$}
        {$k\leftarrow$ arity of $C_i$\;
        $j\leftarrow$ random integer in $[1,k]$\;
        Flip polarity of the $j^\text{th}$ literal of $C_i$\;}}
\Return $I$\;
\end{algorithm}

We present our method to generate challenging model counting instances in Algorithm~\ref{alg:clustergenerator}. The first step is to create a CNF instance with the desired numbers $n$ and $m$ of variables and clauses, and the desired arities $k_1,k_2,\dots,k_m$. To do this, we build either a random CNF instance or a balanced one. The former simply consists in picking a random variable for each of the $\sum_{i=1}^m k_i$ literals in the instance, and negating it with probability $\frac{1}{2}$, with the only restriction being that no clause can be repeated. To build balanced instances on the other hand, we ensure that each variable appears the same amount of times in the instance (give or take one depending on the divisibility of $\sum_{i=1}^m k_i$ by $n$), with half of the occurrences being positive literals and the other half being negative literals.

It is usually easy to find a solution to a random CNF instance, at least for small sizes. In contrast, balanced instances are hard to solve~\cite{balancedsat}. For both types of instances, adding a cluster of solutions, as our generator does, makes (or keeps) the instance easy to solve for the satisfaction version of the problem: on a laptop with a 1.60GHz processor, the SAT solver Glucose 4.1~\cite{glucose} can solve all 480 instances that we submitted to the model counting competition in a combined time of less than two seconds. This demonstrates that the challenge posed by our instances comes entirely from the counting part, and not from determining satisfiability.

Once it has obtained an initial random or balanced instance, our algorithm then generates a random assignment to the $n$ variables and proceeds to place the $n+1$ solutions within a Hamming distance of 1 of this central model. By stuffing many solutions into a small sector of the assignment space, the goal is to make the distribution of the models as uneven as possible. Interestingly, this goes against a common heuristic for creating difficult satisfiability instances, which is to maximize balance~\cite{balancedsat,difficulttriangles}.

\begin{algorithm}
\caption{\label{alg:clustergenerator}GenerateClusterInstance($K$,$n$,$m$)}
\KwData{Two integers $n$ and $m$, and a set $K=\{k_1,k_2,\dots,k_m\}$ of $m$ integers.}
\KwResult{An instance with $n$ variables and $m$ clauses of arities $k_1,k_2,\dots,k_m$ that is challenging for model counters.}
$I\leftarrow$ instance with $n$ variables $v_1,v_2,\dots,v_n$ and $m$ clauses with $k_1,k_2,\dots,k_m$ literals\;
$S\leftarrow\emptyset$\;
\For{$i\leftarrow 1$ \KwTo $n$}
    {$r\leftarrow$ random integer in $[1,2]$\;
    \eIf{$r==1$}{$S\leftarrow S\cup\{(v_i={\tt True})\}$\;}
    {$S\leftarrow S\cup\{(v_i={\tt False})\}$\;}}
PlaceSolution($I$,$S$)\;
\For{$i\leftarrow 1$ \KwTo $n$}
    {Flip the assignment of $v_i$ in $S$\;
    PlaceSolution($I$,$S$)\;
    Flip back the assignment of $v_i$ in $S$ to its original value\;}
    \Return $I$\;
\end{algorithm}

The runtime of Algorithm~\ref{alg:clustergenerator} is linear in the total number of literals in the instance multiplied by the number of variables:

\begin{lemma}
Let $k$, $n$ and $m$ be three integers, and let $K=\{k_1,k_2,\dots,k_m\}$ be a set of $m$ integers such that $\max_{1\leq i\leq m}(k_i)=k$. Then GenerateClusterInstance($K$, $n$, $m$) runs in $O(k\times n\times m)$.
\end{lemma}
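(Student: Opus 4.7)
The plan is to analyze the runtime of each component of Algorithm~\ref{alg:clustergenerator} separately and then sum them up, showing that the dominant contribution is the loop that calls \texttt{PlaceSolution} $n$ times. The main obstacle, if any, is cleanly bounding the cost of a single \texttt{PlaceSolution} call in terms of $k$ and $m$; everything else is bookkeeping.

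First I would handle the subroutine \texttt{PlaceSolution}($I$,$S$) from Algorithm~\ref{alg:placesolution}. For each of the $m$ clauses $C_i$, the check ``no literal in $C_i$ is true in $S$'' can be performed by scanning the $k_i$ literals of $C_i$ and comparing each with the corresponding assignment in $S$, for a cost of $O(k_i)$. If the clause is falsified, flipping the polarity of one literal is $O(1)$. Summing over $i$, one call to \texttt{PlaceSolution} costs $O\!\left(\sum_{i=1}^m k_i\right) = O(k m)$, since $k_i \leq k$ for every $i$.

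Next I would account for the remaining work in \texttt{GenerateClusterInstance}. Allocating the instance with $n$ variables and the prescribed clause structure costs $O(n + \sum k_i) = O(n + km)$. Building the initial assignment $S$ in the first \texttt{for} loop is $O(n)$, as is the single call to \texttt{PlaceSolution} that follows (which is $O(km)$). The second \texttt{for} loop runs $n$ iterations, each of which performs two $O(1)$ flips and one call to \texttt{PlaceSolution}, for a per-iteration cost of $O(km)$ and a total cost of $O(nkm)$.

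Finally, I would add these contributions: $O(n + km) + O(n) + O(km) + O(nkm) = O(nkm)$, since $nkm$ dominates all other terms (assuming $k,n,m \geq 1$). This yields the claimed bound $O(k \times n \times m)$ and completes the proof.
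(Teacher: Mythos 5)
Your proof is correct and follows essentially the same decomposition as the paper's: bound one call to PlaceSolution by $O(km)$, then observe that the $n$ calls in the second loop dominate. The only difference is that you cost Line~1 as mere allocation in $O(n+km)$, whereas the paper also covers the balanced-instance construction (which it bounds at $O(k\times n\times m)$ because each literal requires an $O(n)$ search for the least-used variable); this does not affect the final bound, since that term is still absorbed by $O(k\times n\times m)$.
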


\begin{proof}
First we compute the complexity of placing a solution. In Algorithm~\ref{alg:placesolution}, the main loop (Line 1) iterates through the $m$ clauses. Checking whether a clause violates the input solution $S$ (Line 2) can be done by looping through all of its at most $k$ literals, while the operations in Lines 3-5 take $O(1)$ each. The final runtime is the product of the two loops, so $O(k\times m)$.

If the initial instance in Algorithm~\ref{alg:clustergenerator} is random, then the complexity of building it is the number of its literals, so $O(k\times m)$. If however we are generating a balanced instance, then for each literal we need to pick the variable with the fewest occurrences so far, which can be found in $O(n)$ by keeping track of the number of appearances of the variables. Therefore, Line 1 executes in $O(k\times n\times m)$. Getting a random model (Lines 2-10) can be done in $O(n)$, and we have shown that placing this central solution in the instance (Line 11) takes $O(k\times m)$. Placing $n$ more solutions (Lines 12-16) thus takes $O(k\times n\times m)$, so the total runtime of the algorithm is $O(k\times n\times m)+O(n)+O(k\times m)+O(k\times n\times m)$, which amounts to $O(k\times n\times m)$.
\end{proof}

The instances that we generated for the competition, as well as the instances that we tested in our experiments, have the same number of literals in all of their clauses. However, our algorithm is able to generate instances with clauses of heterogeneous sizes.

\section{Competition}\label{sec:competition}

The 2021 edition of the Model Counting Competition~\cite{competition2021} took place alongside the SAT conference of the same year~\cite{satconference}. It was comprised of four tracks, each with 100 instances. Tracks 2 and 3 were concerned with, respectively, weighted and projected model counting. While we did not target these tracks, the organizers selected some of our instances and assigned the weights and projected sets themselves. Since we did not have control over these additional parameters, we leave the results of these tracks to Appendix~\ref{sec:tracks23} and will only discuss tracks 1 and 4 in the current section.

We used random instances as the initial instance (Line 1 in Algorithm~\ref{alg:clustergenerator}) for half of our benchmarks, and balanced instances for the other half. For the former type, the number of variables $n$ was either 90 or 100. We had noticed in earlier tests that instances built upon balanced instances seemed to be slightly harder, so we generated slightly smaller instances of the second type, with values of 80 and 90 for $n$. For each instance, we picked a number of clauses $m$ such that $2.6n\leq m\leq 3n$. All of our instances have arity $k=3$. We submitted all the benchmarks that we generated, without filtering.

In order for a track 1 instance to be deemed solved by a solver, the solver was required to return in at most 60 minutes a model count within 1\% of the actual number of solutions of the instance. We present the rankings of the solvers that competed in this track in Table~\ref{tab:solverst1}. All results in this table, as well as all results from the other tables and figures in this section and in Appendix~\ref{sec:tracks23}, are the actual competition results. Some of the model counters were ran with two different configurations, we include both when this is the case. In addition of reporting the total number of instances solved for each model counter, we also distinguish the results between the instances that we submitted and instances submitted by others, as well as between small and large instances. Here we consider an instance to be small if either its number of variables $n$ or its number of clauses $m$ is no more than the corresponding value for the largest benchmark that we submitted, so either $n\leq 100$ or $m\leq 300$. In other words, small instances are the ones that have sizes similar to our instances.

\begin{table}[t]
\centering
\caption{Results by solver, track 1 instances.}
\label{tab:solverst1}
\begin{tabular}{|c|c|c c c|c|}
\hline
Solver & Ours & Not ours & Small, not ours & Large & Total\\
\hline
SharpSAT-TD & 19/20 & 59/80 & 6/6 & 53/74 & 78/100\\
nus-narasimha (cfg 1) & 5/20 & 55/80 & 4/6 & 51/74 & 60/100\\
nus-narasimha (cfg 2) & 5/20 & 54/80 & 6/6 & 48/74 & 59/100\\
d4 (cfg 2) & 0/20 & 51/80 & 6/6 & 45/74 & 51/100\\
d4 (cfg 1) & 0/20 & 50/80 & 6/6 & 44/74 & 50/100\\
gpmc & 20/20 & 18/80 & 6/6 & 12/74 & 38/100\\
DPMC & 0/20 & 34/80 & 4/6 & 30/74 & 34/100\\
MC2021\_swats (cfg 1) & 0/20 & 34/80 & 5/6 & 29/74 & 34/100\\
MC2021\_swats (cfg 2) & 0/20 & 31/80 & 5/6 & 26/74 & 31/100\\
c2d & 0/20 & 29/80 & 4/6 & 25/74 & 29/100\\
bob & 0/20 & 11/80 & 2/6 & 9/74 & 11/100\\
SUMC2 & 0/20 & 7/80 & 3/6 & 4/74 & 7/100\\
\hline
\end{tabular}
\end{table}

Out of nine distinct model counters, only three of them managed to solve at least one of our instances. Some solvers performed well on small instances when they were submitted by others but not when they came from our generator. For example, {\tt d4} was perfect in the former case but did not solve any of our benchmarks. The opposite, a model counter performing well on our instances but not on instances of similar size, did not occur.

Component caching solvers appear to perform the best against our instances. In particular, {\tt gpmc} was the only model counter to solve all of our benchmarks, despite ranking seventh out of nine on the other instances. On the other hand, the knowledge compilation model counter {\tt d4} could not solve any of our instances despite finishing third in the solver rankings. This could reveal a weakness of this type of solvers against instances produced by our generator.

We present the individual performance of track 1 instances in Figure~\ref{fig:resultst1}. The horizontal axis represents the size of the instance, either in terms of variables or in terms or clauses, and the vertical axis represents the number of solvers that could not output a correct model count for the instance within the one hour timeout. For model counters run with two configurations, each of the configurations is worth half a solver. Difficult instances are positioned near the top and easy ones near the bottom, while small instances can be found on the left and large ones on the right. We used different shapes and colors to visually separate our instances from other benchmarks. The number of instances in each set is indicated between parentheses.

\begin{figure}
\centering
\begin{subfigure}{\textwidth}
\centering
\includegraphics[width=.80\textwidth]{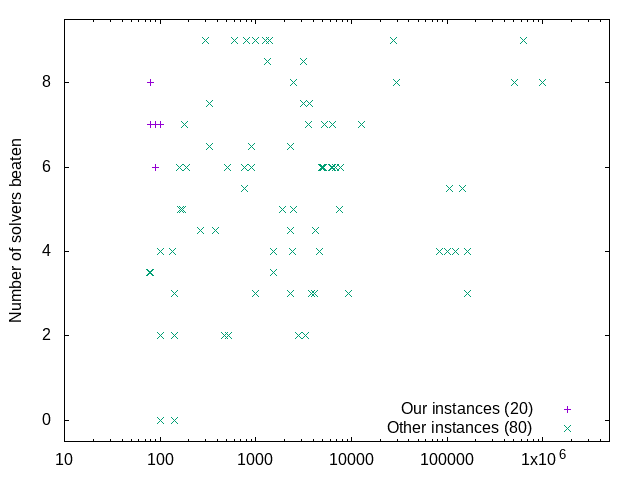}
\caption{Plotted against number of variables.}
\label{fig:resultst1_variables}
\end{subfigure}
\begin{subfigure}{\textwidth}
\centering
\includegraphics[width=.80\textwidth]{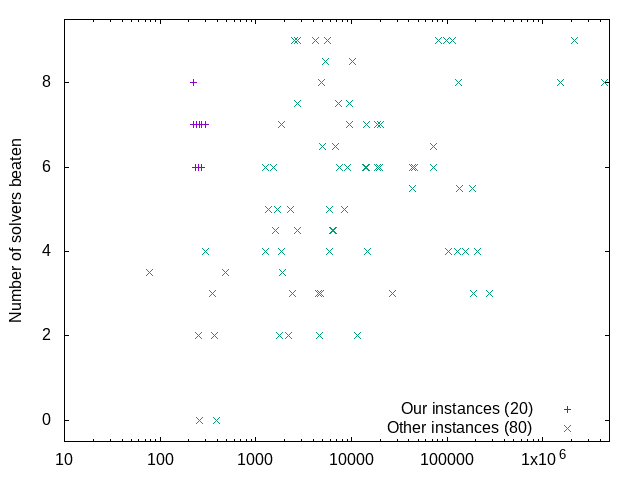}
\caption{Plotted against number of clauses.}
\label{fig:resultst1_clauses}
\end{subfigure}
\caption{Results for track 1 instances.}
\label{fig:resultst1}
\end{figure}

While none of our instances manages to beat all the solvers, it is clear from the plots that our benchmarks are the most challenging instances of their size, especially when size is defined in terms of number of clauses. When setting small size and difficulty as the two objectives, all but two instances at the Pareto frontier belong to our benchmark set.

In track 4, particularly difficult instances were selected and model counts returned by the solvers were allowed to deviate from the actual number of solutions by up to 80\% for an instance to be considered solved. The 60 minutes timeout remained the same. We present track 4 results by solver in Table~\ref{tab:solverst4} and by instance in Figure~\ref{fig:resultst4}.

\begin{table}[t]
\centering
\caption{Results by solver, track 4 instances.}
\label{tab:solverst4}
\begin{tabular}{|c|c|c c c|c|}
\hline
Solver & Ours & Not ours & Small, not ours & Large & Total\\
\hline
SharpSAT-TD & 7/13 & 61/87 & 16/16 & 45/71 & 68/100\\
Nus-narasimha (cfg 1) & 6/13 & 59/87 & 16/16 & 43/71 & 65/100\\
Nus-narasimha (cfg 2) & 6/13 & 58/87 & 16/16 & 42/71 & 64/100\\
d4 (cfg 2) & 0/13 & 53/87 & 16/16 & 37/71 & 53/100\\
d4 (cfg 1) & 0/13 & 52/87 & 16/16 & 36/71 & 52/100\\
c2d & 0/13 & 50/87 & 16/16 & 34/71 & 50/100\\
DPMC & 0/13 & 48/87 & 16/16 & 32/71 & 48/100\\
MC2021\_swats (cfg 1) & 0/13 & 40/87 & 16/16 & 24/71 & 40/100\\
MC2021\_swats (cfg 2) & 0/13 & 38/87 & 16/16 & 22/71 & 38/100\\
SUMC2 & 0/13 & 26/87 & 15/16 & 11/71 & 26/100\\
\hline
\end{tabular}
\end{table}

\begin{figure}
\centering
\begin{subfigure}{\textwidth}
\centering
\includegraphics[width=.80\textwidth]{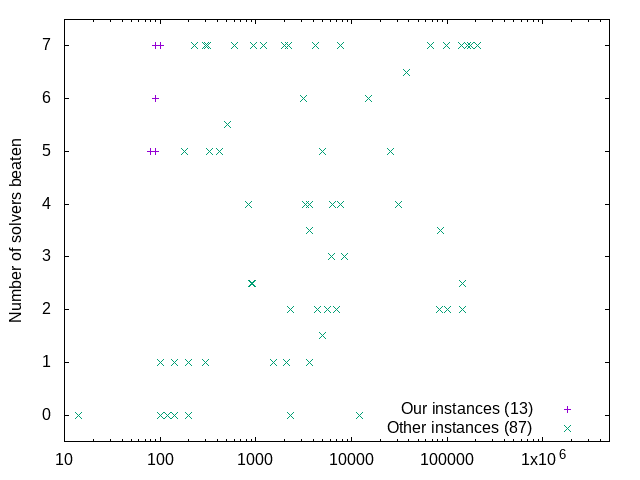}
\caption{Plotted against number of variables.}
\label{fig:resultst4_variables}
\end{subfigure}
\begin{subfigure}{\textwidth}
\centering
\includegraphics[width=.80\textwidth]{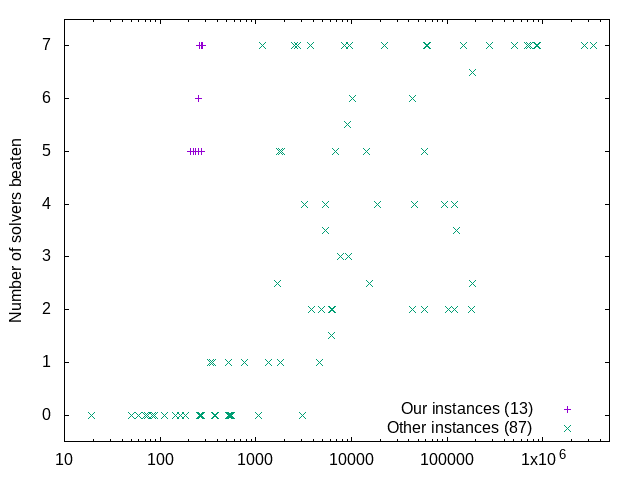}
\caption{Plotted against number of clauses.}
\label{fig:resultst4_clauses}
\end{subfigure}
\caption{Results for track 4 instances.}
\label{fig:resultst4}
\end{figure}

The performance of our instances was already decent in track 1, but it is even better in track 4. Only two model counters out of seven could solve at least one of our benchmarks. The majority of the solvers were perfect on small instances submitted by others, but could not solve any of our instances. In fact, every single model counter in the track performed better on \emph{large} instances than on the ones created by our generator. The smallest six instances that no model counter could solve were ours. None of our thirteen instances in the track was solved by more than two model counters, while all sixteen other small instances were solved by at least six of the seven model counters.

\section{Prediction}\label{sec:empire}

All experiments conducted in this section were run on a Dell PowerEdge R410 with an Intel Xeon E5620 processor. Three methods were used to generate the CNF instances tested:
\begin{itemize}
\item random: completely random $k$-CNF instances with $n$ variables, where the set of $k$ variables for each clause with $k$ literals is picked uniformly at random among the $\binom{n}{k}$ possibilities, and the polarity of each literal is determined by an independent coin toss.
\item random+solution: random CNF instances to which exactly one solution is placed in the way described in Algorithm~\ref{alg:placesolution}.
\item random+cluster: CNF instances obtained from our own Algorithm~\ref{alg:clustergenerator} generator, with a random instance as the Line 1 base.
\end{itemize}

Random instances were chosen to have a control dataset with instances that we expected to be easy, or at least not as challenging as our random+cluster instances. We added random+solution instances to the tests to have a kind of instances that is a compromise between the other two types. Random+solution instances are not as structured as random+cluster instances, but unlike completely random instances they cannot contain a small subset that is trivially unsatisfiable. However, as we will observe in Figures~\ref{fig:solversK3} and~\ref{fig:nbsolutions}, in practice random+solution instances behave in almost exactly the same way as standard random instances.

Whether a CNF instance is satisfiable or not does not depend on the tool used to solve it. Therefore the location of the phase transition between satisfiability and unsatisfiability is solver independent, as is the location of the peak of difficulty for the satisfaction version of the problem. This peak might be taller for some solvers than for others, but it will always happen at the same \#clauses/\#variables ratio.

As mentioned before, we cannot use (un-)satisfiability to characterize the location of the peak of difficulty for \#SAT. If a $k$-CNF instance with $n$ variables has no clauses, then its number of models is trivially $2^n$. If on the other hand the instance contains all possible $\binom{n}{k}\times 2^k$ different clauses, then its number of models is trivially 0. It is reasonable to believe that the peak of difficulty for model counting will indeed happen, at a constrainedness ratio that is somewhere in between these two extremes. It is however not obvious why it should happen at the same location for all solvers. For this reason we elected to use three different model counters to carry out the experiments:
\begin{itemize}
\item Cachet\cite{cachet}: a well established model counter with component caching and clause learning.
\item {\tt gpmc}\cite{gpmc}: this component caching model counter performed relatively poorly during the competition on other benchmarks, but was the best one on our instances, being the only model counter able to solve them all in Track 1. We chose it for this reason.
\item {\tt d4}\cite{d4}: the reason why we chose this knowledge compilation model counter is the opposite of the one why we picked {\tt gpmc}: {\tt d4} performed well overall in the competition, finishing third out of nine in Track 1, but could not solve a single one of our benchmarks.
\end{itemize}

We compare these three model counters on 3-CNF instances with 80 variables in Figure~\ref{fig:solversK3}. The ratio between the number of clauses and the number of variables ranges from 1 to 5, with a step of 0.1. For each number of clauses $m$ in this set, we generated 50 instances of each type. For all three generation methods, we can see that {\tt gpmc} performs the best, that {\tt d4} has the highest runtimes, and that Cachet consistently finishes between the other two. While the three model counters are not equally fast in solving the instances, their respective runtimes increase and decrease together, and they all peak at the same place. This would seem to indicate that the peak of difficulty for counting the number of models of 3-CNF instances is indeed associated with some phase transition that is independent of the model counter used.

\begin{figure}
\centering
\begin{subfigure}{\textwidth}
\centering
\includegraphics[width=.55\textwidth]{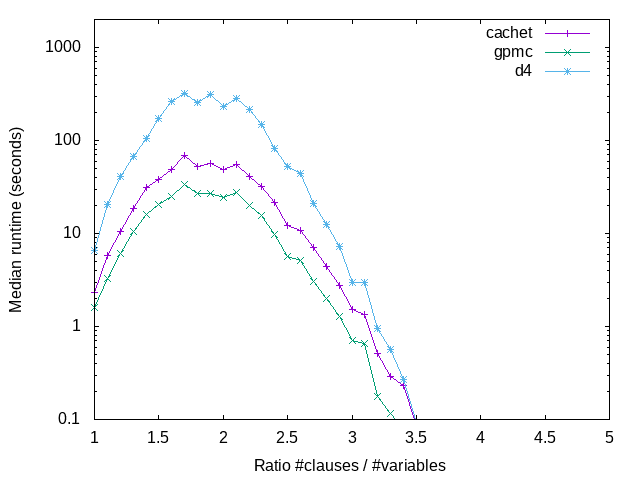}
\caption{Random instances.}
\label{fig:solversK3G1}
\end{subfigure}
\begin{subfigure}{\textwidth}
\centering
\includegraphics[width=.55\textwidth]{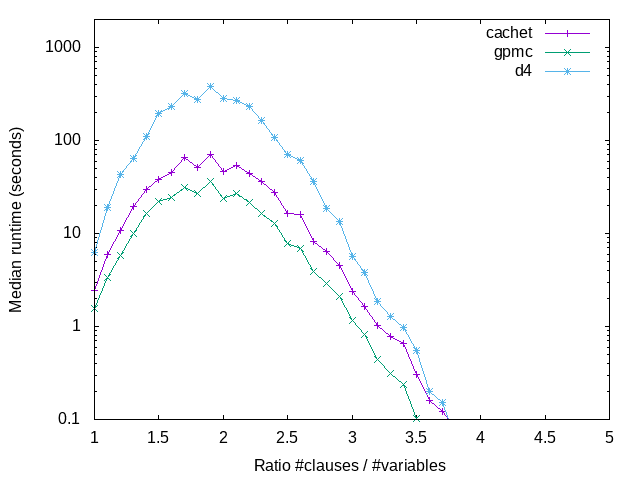}
\caption{Random+solution instances.}
\label{fig:solversK3G2}
\end{subfigure}
\begin{subfigure}{\textwidth}
\centering
\includegraphics[width=.55\textwidth]{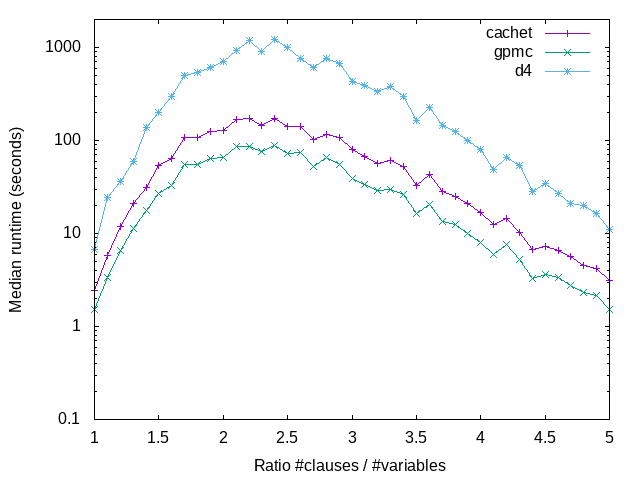}
\caption{Random+cluster instances.}
\label{fig:solversK3G3}
\end{subfigure}
\caption{Model counter comparison for 3-CNF instances with 80 variables.}
\label{fig:solversK3}
\end{figure}

We present in Figure~\ref{fig:solversG1} how the comparison between model counters evolves when the arity of the instances is increased beyond 3. For 4-CNF instances, the changes in difficulty experienced by the model counters do not happen at the exact same rate, unlike what we witnessed for ternary instances. For example, the runtimes experienced by Cachet do not decrease as fast in the highly constrained region as the runtimes for {\tt d4}. Consequently, the relative order of the model counters does not stay the same when increasing the constrainedness ratio. For 5-CNF instances, the differences are even more drastic, with the three peaks of difficulties occurring at clearly distinct places of the constrainedness map. This shows that for arities strictly higher than 3, there is no model counter independent phase transition that can characterize where the peaks of difficulty will happen because, while these peaks do exist, they are found at different places depending on the model counter used.

\begin{figure}
\centering
\begin{subfigure}{\textwidth}
\centering
\includegraphics[width=.55\textwidth]{Plots/solvers_G1_K3.png}
\caption{$k=3$, $n=80$.}
\label{fig:solversG1K3}
\end{subfigure}
\begin{subfigure}{\textwidth}
\centering
\includegraphics[width=.55\textwidth]{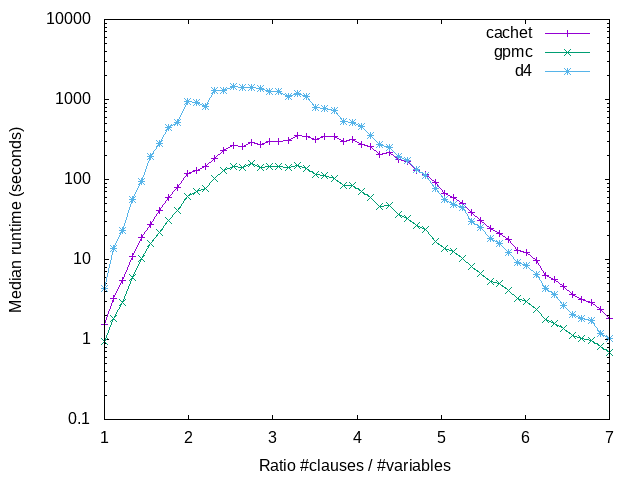}
\caption{$k=4$, $n=55$.}
\label{fig:solversG1K4}
\end{subfigure}
\begin{subfigure}{\textwidth}
\centering
\includegraphics[width=.55\textwidth]{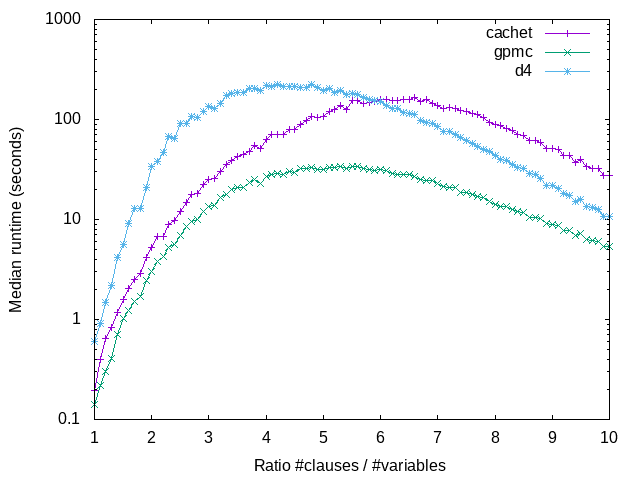}
\caption{$k=5$, $n=40$.}
\label{fig:solversG1K5}
\end{subfigure}
\caption{Model counter comparison for random instances.}
\label{fig:solversG1}
\end{figure}

Figure~\ref{fig:solversG1} contained the model counter results for random instances. The relative performance of the model counters exhibits a similar behavior for the other two types of instances, as can be seen in Figures~\ref{fig:solversG2} and~\ref{fig:solversG3} in Appendix~\ref{sec:solversG23}.

An interesting observation that can be made from Figure~\ref{fig:solversK3} (as well as from Figures~\ref{fig:solversG1} and~\ref{fig:solversG2}), is that while our own random+cluster instances are more difficult to solve than random and random+solution instances, the latter two give nearly identical runtimes, no matter which model counter is used. In addition of that, we show in Figure~\ref{fig:nbsolutions} that the actual number of solutions of a random instance and of a random+solution instance with the same arity, number of variables and number of clauses are the same. The only exception is for 3-CNF instances with a constrainedness ratio over 3.5, where placing a solution prevents the number of models to fall as quickly as it would in a completely random instance. Note that we know from Figure~\ref{fig:solversK3} that these over-constrainedness instances are trivial to solve anyway. Since random and random+solution instances behave the same in all relevant regions of the constrainedness map, we will only study random (and random+cluster) instances in the rest of this section.

\begin{figure}
\centering
\includegraphics[width=.75\textwidth]{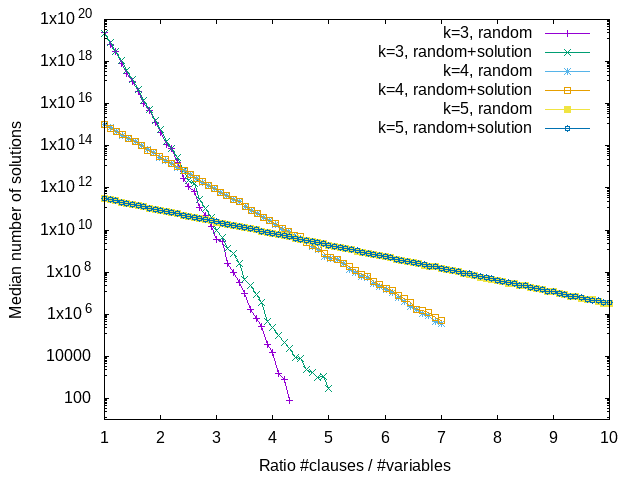}
\caption{Number of solutions for 3-CNF instances with 80 variables and constrainedness ratio from 1 to 5, 4-CNF instances with 55 variables and constrainedness ratio from 1 to 7, and 5-CNF instances with 40 variables and constrainedness ratio from 1 to 10.}
\label{fig:nbsolutions}
\end{figure}

We have shown that the location of the peak of the difficulty depends on the model counter used when the arity $k$ is strictly greater than 3, so we will restrict our prediction efforts on ternary CNF instances. We have also established that for $k=3$ the three model counters we ran all have their peaks of difficulty near the same constrainedness value. Therefore, from now on we will only present the results of one model counter for each number of variables $n$. If $n>80$, then the instances become too challenging to be solved in a reasonable time by Cachet or {\tt d4}, so we use {\tt gpmc}. We will keep Cachet for $n=80$, and will use {\tt d4} for instances with fewer variables.

We present in Figure~\ref{fig:peaks} our experimental results for random and random+cluster 3-CNF instances. The numbers of variables studied are 70, 80, 90 and 100. As in all other figures in this section, for each generation method 50 instances were tested for each combination of the number of variables $n$ and the number of clauses $m$. The plots show that as the size of the instances varies, the peak of difficulty for each type of instances remains in the same location. For random instances, it can always be found just before the ratio $\frac{m}{n}$ reaches 2. For random+cluster instances, it occurs a little later, when the ratio is between 2 and 2.5.

\begin{figure}
\centering
\begin{subfigure}{\textwidth}
\centering
\includegraphics[width=.80\textwidth]{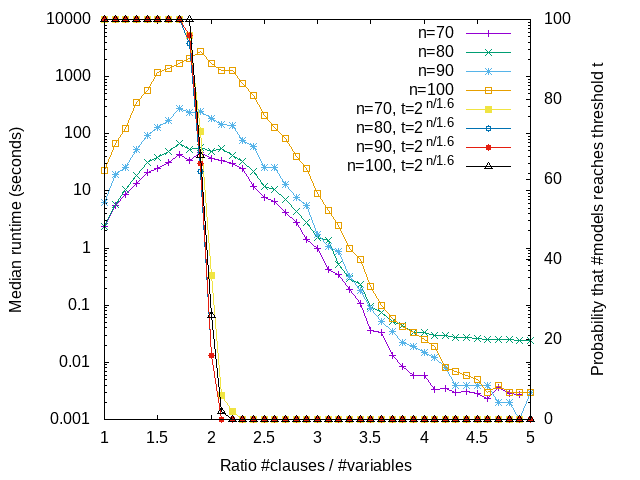}
\caption{Random instances.}
\label{fig:peaksG1}
\end{subfigure}
\begin{subfigure}{\textwidth}
\centering
\includegraphics[width=.80\textwidth]{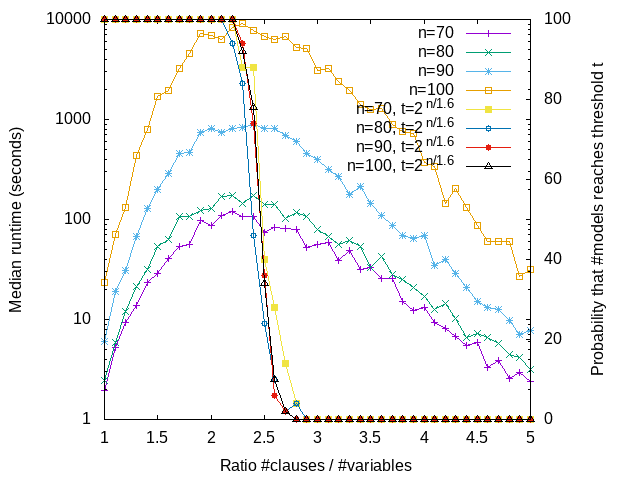}
\caption{Random+cluster instances.}
\label{fig:peaksG3}
\end{subfigure}
\caption{Positions of the peaks of difficulty and associated phase transitions for 3-CNF instances.}
\label{fig:peaks}
\end{figure}

To find a phase transition associated with the location of the peaks of difficulty, we look at the probability for the number of solutions to reach a given threshold. When this threshold is 2 to the power of a fraction of $n$, then it is known that phase transitions exist~\cite{phasetransitionsMC}. While no peak of difficulty was observed by the authors, probably because the threshold they studied was either too high or too low, we show in Figure~\ref{fig:peaks} that if the targeted number of solutions is $2^\frac{n}{1.6}$, then there is a sharp transition between a phase where all instances have more solutions than this threshold, and a phase where none of the instances does, and that this transition corresponds exactly to where the hardest instances are. Moreover, while the peaks of difficulty for random and random+cluster instances occur at different values of the constrainedness ratio $\frac{m}{n}$ (2 and 2.5 respectively), in terms of number of solutions they are found at the same threshold value ($2^\frac{n}{1.6}$). This suggests that a phase transition based on this particular number of solutions is a more general indicator of the location of the peak of difficulty for model counting than the constrainedness ratio.

We now create difficult ternary model counting instances for a higher number of variables $n$, namely 120. Generating instances for many constrainedness values, solving them all and keeping the most challenging ones would take too long to be feasible, so instead we use our results to immediately select the number of clauses $m$ most likely to yield hard instances. We observed a peak of difficulty around $m=1.9n$ for random instances and around $m=2.3n$ for random+cluster instances, so we build random instances with $1.9\cdot 120=228$ clauses and random+cluster instances with $2.3\cdot 120=276$ clauses. We also set a timeout of 6 hours for random instances, and of 24 hours for the more difficult random+cluster instances. As we show in Table~\ref{tab:timeouts}, timeout was reached on every single of the 100 3-CNF instances with 120 variables that we generated. This confirms that we can get hard ternary model counting instances for any number of variables by simply picking the same ratio $\frac{m}{n}$ for a given generation method.

\begin{table}[t]
\centering
\caption{Predicting where we can find challenging model counting instances.}
\label{tab:timeouts}
\begin{tabular}{|l|c|c|c|c|l|}
\hline
Generation method & $k$ & $n$ & $m$ & Number of instances & Minimum runtime\\
\hline
Random & 3 & 120 & 228 & 50 & timeout (6 hours)\\
Random+cluster & 3 & 120 & 276 & 50 & timeout (24 hours)\\
\hline
\end{tabular}
\end{table}

\section{Conclusion}\label{sec:conclusion}

We described an algorithm to generate difficult model counting instances of small size. Our instances were tested against other benchmarks by state of the art model counters, and proved to be the most challenging of their size. We also empirically proved that for arity $k=3$ the difficulty of both our instances and random instances peaks at a precise location in the constrainedness map. This location is independent of the model counter used and is characterized by a phase transition defined by the probability for the number of models to reach a given threshold. Our experiments then showed that no such characterization can be done for greater arities, because when $k\geq 4$ the peaks of difficulty of different model counters can occur at vastly different constrainedness values. Finally we used our findings to successfully predict a number of clauses where reliably hard instances could be found.

In our experiments, we observed that the threshold associated to the phase transition was around $2^\frac{n}{1.6}$. Future work could focus on determining its exact value. In particular, it would be interesting to find out whether the divisor in the exponent being close to the golden ratio $\frac{1+\sqrt{5}}{2}\approx 1.618$ is significant, or a mere coincidence.

\section*{Acknowledgements}
This material is based upon works supported by the Science Foundation Ireland under Grant No. 12/RC/2289-P2 which is co-funded under the European Regional Development Fund. For the purpose of Open Access, the authors have applied a CC BY public copyright licence to any Author Accepted Manuscript version arising from this submission.

\bibliography{modelcounting}
\bibliographystyle{plain}

\appendix

\section{Tracks 2 and 3 of the 2021 Model Counting Competition}\label{sec:tracks23}

We present track 2 results by solver in Table~\ref{tab:solverst2} and by instance in Figure~\ref{fig:resultst2}. Note that two instances with 6 variables and 4 clauses, both solved by all model counters, are outside the scope of Figure~\ref{fig:resultst2}. For track 3, results by solver can be found in Table~\ref{tab:solverst3} and results by instance in Figure~\ref{fig:resultst3}. Our instances performed decently in track 2, and quite well in track 3, but since the influence of the weight and projection set parameters can be considerable, and we did not include them in our submission, any insight from this data should be taken out with caution.

\begin{table}[t]
\centering
\caption{Results by solver, track 2 instances.}
\label{tab:solverst2}
\begin{tabular}{|c|c|c c c|c|}
\hline
Solver & Ours & Not ours & Small, not ours & Large & Total\\
\hline
SharpSAT-TD & 5/5 & 85/95 & 10/10 & 75/85 & 90/100\\
d4 (cfg 1) & 0/5 & 80/95 & 8/10 & 72/85 & 80/100\\
d4 (cfg 2) & 0/5 & 80/95 & 8/10 & 72/85 & 80/100\\
c2d & 0/5 & 79/95 & 8/10 & 71/85 & 79/100\\
DPMC & 0/5 & 46/95 & 9/10 & 37/85 & 46/100\\
nus-narsimha & 5/5 & 20/95 & 7/10 & 13/85 & 25/100\\
\hline
\end{tabular}
\end{table}

\begin{table}[t]
\centering
\caption{Results by solver, track 3 instances.}
\label{tab:solverst3}
\begin{tabular}{|c|c|c c c|c|}
\hline
Solver & Ours & Not ours & Small, not ours & Large & Total\\
\hline
gpmc & 16/42 & 54/58 & 2/2 & 52/56 & 70/100\\
d4 (cfg 1) & 6/42 & 51/58 & 2/2 & 49/56 & 57/100\\
nus-narasimha (cfg 2) & 9/42 & 43/58 & 1/2 & 42/56 & 52/100\\
nus-narasimha (cfg 1) & 10/42 & 41/58 & 1/2 & 40/56 & 51/100\\
pc2bdd & 2/42 & 39/58 & 0/2 & 39/56 & 41/100\\
ProCount & 4/42 & 17/58 & 2/2 & 15/56 & 21/100\\
c2d & 0/42 & 4/58 & 1/2 & 3/56 & 4/100\\
d4 (cfg 2) & 0/42 & 0/58 & 0/2 & 0/56 & 0/100\\
\hline
\end{tabular}
\end{table}

\begin{figure}
\centering
\begin{subfigure}{\textwidth}
\centering
\includegraphics[width=.80\textwidth]{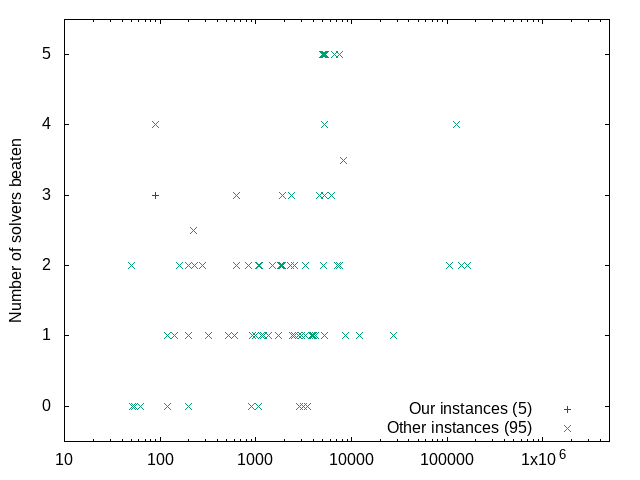}
\caption{Plotted against number of variables.}
\label{fig:resultst2_variables}
\end{subfigure}
\begin{subfigure}{\textwidth}
\centering
\includegraphics[width=.80\textwidth]{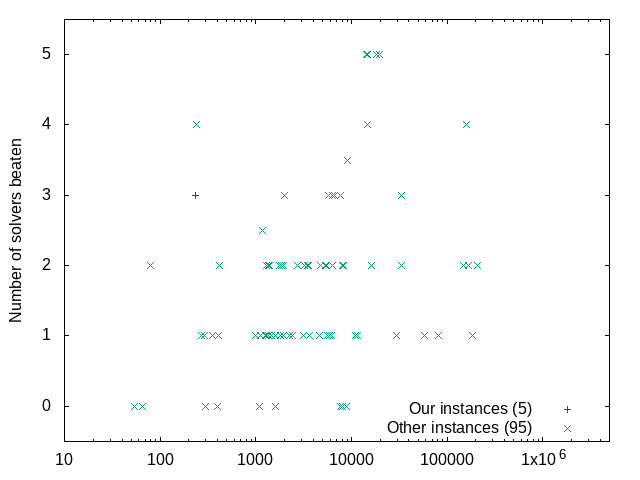}
\caption{Plotted against number of clauses.}
\label{fig:resultst2_clauses}
\end{subfigure}
\caption{Results for track 2 instances.}
\label{fig:resultst2}
\end{figure}

\begin{figure}
\centering
\begin{subfigure}{\textwidth}
\centering
\includegraphics[width=.80\textwidth]{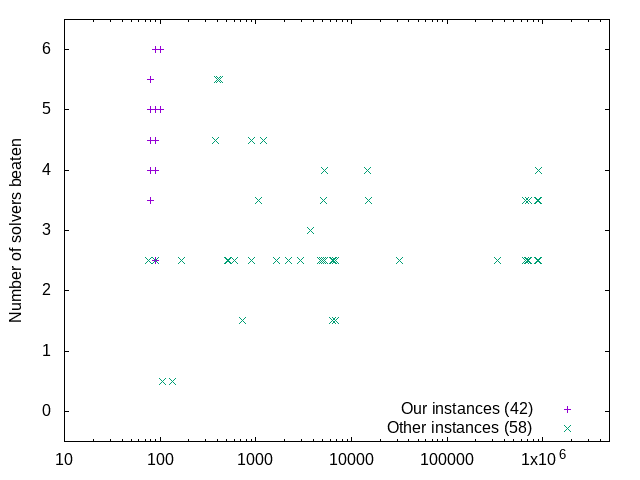}
\caption{Plotted against number of variables.}
\label{fig:resultst3_variables}
\end{subfigure}
\begin{subfigure}{\textwidth}
\centering
\includegraphics[width=.80\textwidth]{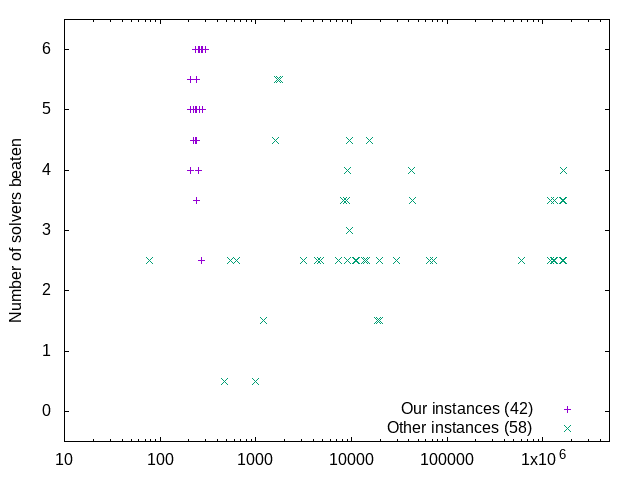}
\caption{Plotted against number of clauses.}
\label{fig:resultst3_clauses}
\end{subfigure}
\caption{Results for track 3 instances.}
\label{fig:resultst3}
\end{figure}

\section{Model counter comparison for random+solution and random+cluster instances}\label{sec:solversG23}

We compare Cachet, {\tt gpmc} and {\tt d4} on random+solution instances in Figures~\ref{fig:solversG2} and on random+cluster instances in Figure~\ref{fig:solversG3}. As is the case for random instances (Figure~\ref{fig:solversG1}), the peaks of difficulty of the model counters drift further apart as the arity increases.

\begin{figure}
\centering
\begin{subfigure}{\textwidth}
\centering
\includegraphics[width=.55\textwidth]{Plots/solvers_G2_K3.png}
\caption{$k=3$, $n=80$.}
\label{fig:solversG2K3}
\end{subfigure}
\begin{subfigure}{\textwidth}
\centering
\includegraphics[width=.55\textwidth]{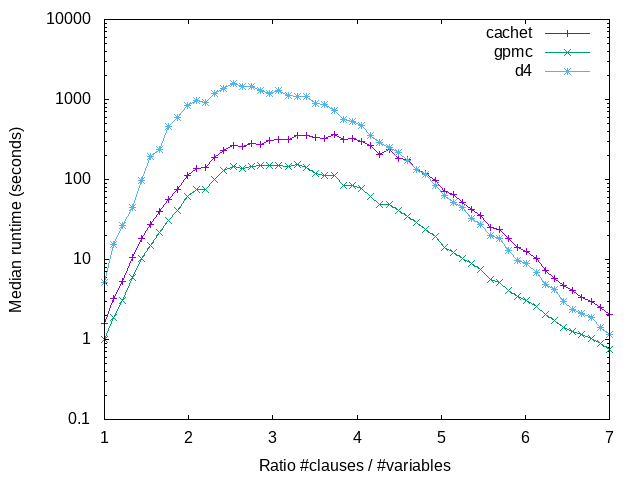}
\caption{$k=4$, $n=55$.}
\label{fig:solversG2K4}
\end{subfigure}
\begin{subfigure}{\textwidth}
\centering
\includegraphics[width=.55\textwidth]{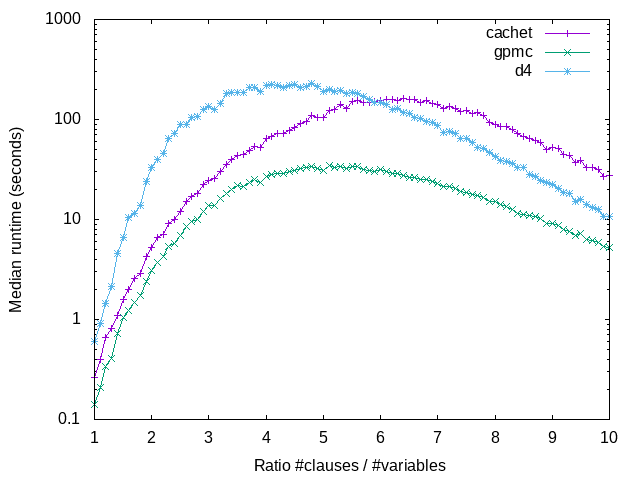}
\caption{$k=5$, $n=40$.}
\label{fig:solversG2K5}
\end{subfigure}
\caption{Model counter comparison for random+solution instances.}
\label{fig:solversG2}
\end{figure}

\begin{figure}
\centering
\begin{subfigure}{\textwidth}
\centering
\includegraphics[width=.55\textwidth]{Plots/solvers_G3_K3.png}
\caption{$k=3$, $n=80$.}
\label{fig:solversG3K3}
\end{subfigure}
\begin{subfigure}{\textwidth}
\centering
\includegraphics[width=.55\textwidth]{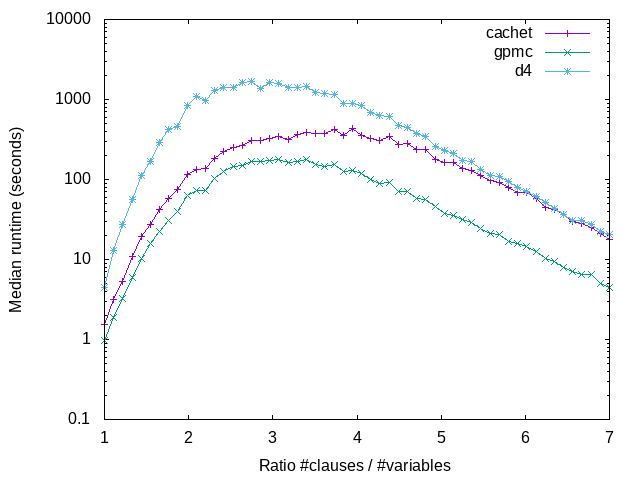}
\caption{$k=4$, $n=55$.}
\label{fig:solversG3K4}
\end{subfigure}
\begin{subfigure}{\textwidth}
\centering
\includegraphics[width=.55\textwidth]{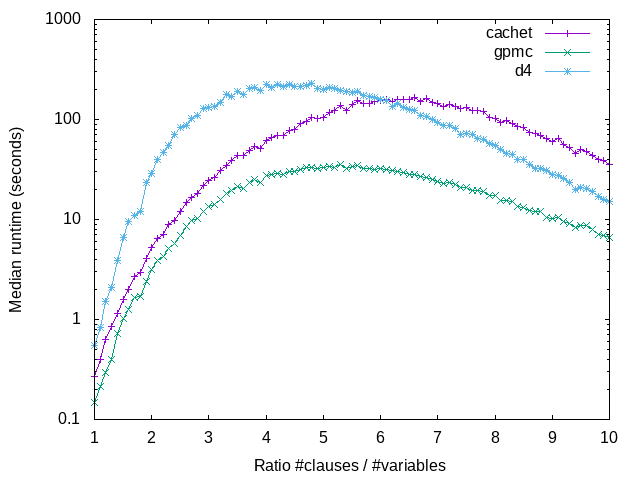}
\caption{$k=5$, $n=40$.}
\label{fig:solversG3K5}
\end{subfigure}
\caption{Model counter comparison for random+cluster instances.}
\label{fig:solversG3}
\end{figure}

\end{document}